\useunder{\uline}{\ul}{}
\newtheorem{theorem}{Theorem}
\newtheorem{lemma}{Lemma}
  \providecommand\BibTeX{{%
    \normalfont B\kern-0.5em{\scshape i\kern-0.25em b}\kern-0.8em\TeX}}}
\begin{document}

\title{Post-Training Fairness Control: A Single-Train Framework for Dynamic Fairness in Recommendation}


\author{Weixin Chen}
\affiliation{
  \institution{Hong Kong Baptist University}
  \city{Hong Kong}
  \country{China}
}
\email{cswxchen@comp.hkbu.edu.hk}

\author{Li Chen}
\affiliation{
  \institution{Hong Kong Baptist University}
  \city{Hong Kong}
  \country{China}
}
\email{lichen@comp.hkbu.edu.hk}

\author{Yuhan Zhao}
\affiliation{
  \institution{Hong Kong Baptist University}
  \city{Hong Kong}
  \country{China}
}
\email{csyhzhao@comp.hkbu.edu.hk}

\begin{abstract}

Despite growing efforts to mitigate unfairness in recommender systems, existing fairness-aware methods typically fix the fairness requirement at training time and provide limited post-training flexibility. 
However, in real-world scenarios, diverse stakeholders may demand differing fairness requirements over time, so retraining for different fairness requirements becomes prohibitive.  
To address this limitation, we propose Cofair, a single-train framework that enables post-training fairness control in recommendation.
Specifically, Cofair introduces a shared representation layer with fairness-conditioned adapter modules to produce user embeddings specialized for varied fairness levels, along with a user-level regularization term that guarantees user-wise monotonic fairness improvements across these levels. 
We theoretically establish that the adversarial objective of Cofair upper bounds demographic parity and the regularization term enforces progressive fairness at user level.
Comprehensive experiments on multiple datasets and backbone models demonstrate that our framework provides dynamic fairness at different levels, delivering comparable or better fairness-accuracy curves than state-of-the-art baselines, without the need to retrain for each new fairness requirement.
Our code is publicly available at \url{https://github.com/weixinchen98/Cofair/}.
\end{abstract}

\keywords{Controllable fairness, Post-training control, Recommender systems}

\maketitle

\section{Introduction}
\label{sec:introduction}



Recommender systems play a pivotal role in assisting users' decision-making process across a broad range of domains, including e-commerce, social media, and entertainment platforms~\cite{linden2003amazon, covington2016deep, wang2018billion, ZCC24, guy2010social}. 
While effective personalization can enhance user satisfaction and platform engagement, it also risks amplifying societal biases inherent in the data, disproportionately affecting particular groups of users based on sensitive attributes such as gender, race, or age~\cite{lifairness, wang2022survey}. 
To this end, fairness-aware approaches have been explored by integrating fairness constraints into the training procedure, typically through learning fair representations of users that are independent of their sensitive attributes~\cite{DBLP:conf/sigir/WuXZZ0ZL022, DBLP:conf/icml/MadrasCPZ18, chen2025causality}. 
However, existing fairness approaches often fix fairness requirements at training time, forcing complete retraining whenever fairness needs evolve, which is a costly bottleneck in real-world deployments.

Recently, some approaches have begun to address the inflexibility issue in fairness deployment post-training. 
For instance, Li et al.~\cite{PCFR} present a personalized fairness approach that enables users to specify the sensitive attributes that would be independent of their recommendations. AFRL~\cite{zhu2024adaptive} adaptively learns fair user representations by treating fairness requirements as inputs with information alignment, allowing users or system developers to select particular attributes to protect. 
However, while these methods offer more flexibility in terms of selecting \emph{which} attributes to protect, they have limited control over \emph{how much} the fairness criteria (a.k.a. different fairness levels) to impose once training has finished, failing to support on-the-fly tuning of fairness degrees at inference time. 
On the other hand, though some studies~\cite{song2019learning, cui2023controllable} provide theoretical fairness guarantees by allowing stakeholders to specify unfairness limits in training. Though their constrained optimization frameworks guarantee controllable fairness during training, they still lack post-training controllability as stakeholders must retrain the entire model to obtain outputs for different fairness levels, which is computationally prohibitive in real-world scenarios.

To this end, we propose \emph{Cofair}, a novel single-training approach that supports post-training fairness adjustments, thereby eliminating the need for repeated end-to-end retraining.
In particular, it consists of a shared representation layer together with fairness-conditioned adapters to progressively optimize different fairness levels while enforcing monotonic improvements in fairness for each user. 
The shared representation layer is designed to capture user characteristics and common patterns essential for balancing accuracy and fairness across various settings, while the fairness-conditioned adapters aim to adjust user representations for specific fairness levels. 
To enforce progressive fairness constraints, we introduce a user-level regularization term to ensure that no individual user’s fairness degrades at higher fairness levels. 
In addition to its intuitive effectiveness in providing progressively fair performance across multiple levels, we formally guarantee this capability from a theoretical perspective by establishing two key statements.
First, we show that the adversarial fairness objective in Cofair upper bounds the group-fairness criterion of demographic parity in lemma, thereby suggesting that minimizing the adversarial fairness loss correlates with a reduction in group disparity. Moreover, our approach naturally accommodates extension to other fairness notions (e.g., equal opportunity) by tailoring the adversarial objective accordingly. 
Second, assisted by the prior lemma, Cofair further enforces non-decreasing fairness improvement for each user as the fairness level increases, guaranteed by the convergence of the user-level regularization.


In summary, the key contributions of this work are four-fold:
\begin{itemize}
    \item We first propose a controllable fairness framework, \emph{Cofair}, that supports adjustable fairness levels via adapter modules, offering post-training flexibility in real-world deployment.
    \item We propose a shared representation layer to capture common patterns for fairness-accuracy balances across levels and a set of fairness-conditioned adapters tailored to specific fairness levels, coupled with user-level regularization that enforces progressive fairness improvements.
    \item We provide theoretical analysis establishing that our adversarial objective upper bounds group fairness criterion (e.g., demographic parity) and Cofair enforces monotonic fairness guarantees for each user.
    \item We conduct extensive experiments against state-of-the-art fairness baselines and demonstrate that \emph{Cofair} delivers controllable fairness at multiple levels with comparable or better fairness-accuracy curves, without retraining.
\end{itemize}


\section{Preliminaries}
\label{sec:preliminaries}
In this section, we introduce the formal setting of recommendation tasks, the notion of sensitive attributes, and the fairness definitions. 

\subsection{Recommendation Task}
Let $\mathcal{U} = \{1, 2, \dots, U\}$ be the set of users and $\mathcal{I} = \{1, 2, \dots, I\}$ be the set of items. Each user $u \in \mathcal{U}$ interacts with a subset of items. We denote the observed user-item interaction set with negative sample by $\mathcal{D} = \{(u,i,j)\}$, where $(u,i,j) \in \mathcal{D}$ indicates a positive interaction $(u,i)$ such as a purchase or click, with a negative sample $(u,j)$ for pair-wise optimization. 
Throughout, we assume that our model is based on user embeddings $\{\mathbf{e}_u\}$ derived from a base recommendation backbone such as BPR~\cite{BPR} or LightGCN~\cite{he2020lightgcn}. 
The goal of a recommender system is to learn a scoring function $\hat{y}_{ui}$ that ranks items for each user $u$ in descending order of predicted preference.
We denote by $\mathcal{L}_{\text{rec}}$ the standard recommendation loss (e.g., BPR loss), which aims to maximize ranking performance.

\subsection{Fairness Task}
We assume each user $u$ is associated with a binary sensitive attribute $a_u \in \{0, 1\}$, such as genders \emph{male} vs. \emph{female}. For group-fairness metrics, we often separate users into two subgroups:
\begin{equation}
G_0 = \{u \mid a_u = 0\}, \quad
G_1 = \{u \mid a_u = 1\}.
\end{equation}
Our approach can naturally extend to non-binary or intersectional attributes by employing multi-dimensional adversarial networks.

A central fairness notion in this paper is \emph{demographic parity} (DP)~\cite{dwork2012fairness, zemel2013learning}, aiming to ensure that the model's predictions are independent of sensitive attributes. Given a recommendation function $G(\mathbf{e}_u)$, the demographic parity difference $\Delta_{\text{DP}}$ is typically defined as:
\begin{equation}
\Delta_{\text{DP}} = \left| \mathbb{E}_{u \in G_1} [G(\mathbf{e}_u)] - \mathbb{E}_{u \in G_0} [G(\mathbf{e}_u)] \right|.
\end{equation}
Smaller $\Delta_{\text{DP}}$ indicates reduced disparity across subgroups. In a top-$K$ recommendation context, DP can also be measured by
comparing the recommendation lists distributed equally across groups~\cite{fairmi}.
Although our focus is on DP for concreteness, the proposed framework is flexible enough to accommodate other fairness notions, such as Equal Opportunity~\cite{DBLP:conf/nips/HardtPNS16}, as discussed in Section~\ref{sec:other_fairness_criteria}.

To integrate fairness, we introduce a fairness loss $\mathcal{L}_{\text{fair}}$ typically enforced via an adversarial network. In Section~\ref{sec:method}, we explain how we partition these components across multiple fairness levels and apply user-level regularization to maintain progressive constraints.



\section{Methodology}
\label{sec:method}

In this section, we present our \emph{controllable fairness} framework, which allows the recommender system to produce a range of fairness-accuracy trade-offs after a single training cycle. The key idea is to introduce a \emph{shared} user representation combined with \emph{fairness-conditioned adapter modules}, each targeting a different fairness level.
We then describe our user-level regularization, ensuring progressive fairness for each user across levels.

\subsection{Model Architecture}\label{sec:method-architecture}
Let $\mathbf{e}_u \in \mathbb{R}^d$ be the original embedding of user $u$, derived from a standard baseline model (e.g., BPR or LightGCN). We aim to transform $\mathbf{e}_u$ into \emph{multiple} embeddings $\mathbf{e}_u^{(t)}$, each calibrated to a different fairness level $t \in \{1,\dots,T\}$. To do so, we define three components:

\subsubsection{Shared Representation Layer}
To efficiently capture user characteristics that are common across different fairness requirements while avoiding redundant learning, we introduce a shared representation layer that serves as the foundation for all fairness levels. This design follows the principle of multi-task learning~\cite{caruana1997multitask, wang2021understanding}, where shared knowledge can benefit multiple related tasks—in our case, recommendation under different fairness constraints.

We map the original embedding $\mathbf{e}_u$ into a lower-dimensional \emph{shared} embedding $\mathbf{s}_u \in \mathbb{R}^{d_s}$:
\begin{equation}
    \mathbf{s}_u = S(\mathbf{e}_u; \theta_{s}),
\end{equation}
where $S$ is a neural network with parameters $\theta_s$. 
This shared architecture
acts as an efficient dimension reduction mechanism~\cite{kusupati2022matryoshka} that distills essential user characteristics common across different fairness levels. 
By maintaining a single shared layer, it provides a stable foundation for learning fairness-invariant features, making the model more robust to fairness adjustments. 
What's more, this design also reduces the model's memory footprint while maintaining its expressive power, as common patterns need only be learned once rather than separately for each fairness level.

\subsubsection{Fairness-Conditioned Adapters} 
Different stakeholders may require varying degrees of fairness in recommendations, from minimal intervention to strict equality across groups. 
To accommodate this spectrum of requirements without compromising the shared user characteristics, we employ specialized adapter modules for each fairness level.
Specifically, for each fairness level $t$, we define an \emph{adapter} network $P^{(t)}$, producing an adapter embedding $\mathbf{p}_u^{(t)} \in \mathbb{R}^{d_p}$:
\begin{equation}
    \mathbf{p}_u^{(t)} = P^{(t)}\bigl(\mathbf{e}_u; \theta_{p}^{(t)}\bigr).
\end{equation}
By employing parallel adapters, the framework can dynamically switch to any desired fairness regime after aligning with the shared representation. This modular architecture not only preserves the core user characteristic (learned in the shared layer) but also limits fairness-specific alterations to each adapter. Consequently, each adapter learns its own ``knob'' to dial in the level of debiasing, thus offering a flexible pathway for controlling fairness across a broad range of user protection requirements.

\subsubsection{Output Layer} The final user embedding $\mathbf{e}_u^{(t)}$ is obtained by concatenating the shared embedding $\mathbf{s}_u$ and the adapter embedding $\mathbf{p}_u^{(t)}$, followed by an output transformation $O(\cdot)$:
\begin{equation}
    \mathbf{e}_u^{(t)} = O\Bigl(\bigl[\mathbf{s}_u ; \mathbf{p}_u^{(t)}\bigr]; \theta_o\Bigr).
\end{equation}
Here, $\theta_o$ are the parameters of the output layer, and $[\cdot;\cdot]$ indicates concatenation. As discussed, we keep the shared portion $\mathbf{s}_u$ to minimize redundancy, while $P^{(t)}$ injects fairness-specific adjustments.

Notably, Cofair is a model-agnostic framework, compatible with various recommendation backbones such as BPR and LightGCN.

\subsection{Loss Functions}\label{sec:method-losses}
Our framework comprises three key losses: (1) a standard \emph{recommendation loss}, (2) an \emph{adversarial fairness loss}, and (3) a \emph{user-level regularization} term.

\subsubsection{Recommendation Loss}
Denote by $\hat{y}_{ui}^{(t)}$ the predicted score for user $u$ and item $i$ at fairness level $t$, derived from $\mathbf{e}_u^{(t)}$. We employ the Bayesian Personalized Ranking (BPR) loss~\cite{BPR}:
\begin{equation}
\label{eq:rec_loss}
\mathcal{L}_{\text{rec}}^{(t)} = - \sum_{(u,i,j) \in \mathcal{D}} \ln \sigma\bigl(\hat{y}_{u i}^{(t)} - \hat{y}_{u j}^{(t)}\bigr),
\end{equation}
where $\sigma(\cdot)$ is the sigmoid function. Our method can be similarly integrated with other recommendation backbones (e.g., LightGCN~\cite{he2020lightgcn}).

\subsubsection{Fairness Loss via Adversarial Network}
To mitigate differences in embeddings for users with different sensitive attributes, we adopt an adversarial network $D(\cdot;\theta_d)$~\cite{goodfellow2014generative, bose2019compositional} that predicts $a_u$ from $\mathbf{e}_u^{(t)}$. Let $a_u \in \{0,1\}$ be the binary attribute for user $u$. We define:
\begin{equation}
\label{eq:fair_loss}
\mathcal{L}_{\text{fair}}^{(t)} = - \sum_{u \in \mathcal{U}} \ell_{\text{BCE}}\bigl(D(\mathbf{e}_u^{(t)};\theta_d),\, a_u\bigr).
\end{equation}
Here, $\ell_{\text{BCE}}$ denotes the binary cross-entropy loss. By \emph{maximizing} this term with respect to $\theta_d$ and \emph{minimizing} it with respect to $(\theta_s,\theta_p^{(t)},\theta_o)$, we enforce an embedding space where $D$ cannot readily discern the sensitive attribute, as a proxy for demographic parity. In Section~\ref{sec:bounding_dp}, we show that minimizing this adversarial loss tightly relates to reducing the demographic parity difference $\Delta_{\text{DP}}$.

\subsubsection{User-Level Regularization}
Our goal is to ensure \emph{progressive} fairness, wherein each user’s fairness strictly improves (or remains the same) as the fairness level $t$ increases. 
Relying solely on group-level metrics could overlook instances where certain individuals become worse off. 
Hence, we impose a penalty if any user’s fairness degrades with a stricter requirement.
To formalize this, we define a per-user fairness loss:
\begin{equation}
\mathcal{L}_{\text{fair}}^{(t)}(u) = -\ell_{\text{BCE}}\bigl(D(\mathbf{e}_u^{(t)};\theta_d),\, a_u\bigr).
\end{equation}
We then incorporate a \emph{user-level regularizer} to penalize scenarios where fairness at level $t+1$ is worse than at level $t$:
\begin{equation}
\label{eq:user_reg}
\mathcal{L}_{\text{reg}} = \sum_{u \in \mathcal{U}} \sum_{t=1}^{T-1} \text{softplus}\Bigl(\mathcal{L}_{\text{fair}}^{(t+1)}(u) - \mathcal{L}_{\text{fair}}^{(t)}(u)\Bigr).
\end{equation}
Minimizing this term enforces that each user’s fairness loss \emph{does not increase} when moving to a stricter fairness level.
The user-level regularization serves as a critical component for ensuring fairness consistency at the individual user level, penalizing any degradation in per-user fairness as the fairness level increases.
This guarantees that stricter fairness levels provide monotonically improving fairness for each user, not just on \emph{average}, leading to more equitable and stable recommendations across the user base. 
Additionally, this regularization may help prevent fairness oscillations during training, promoting smoother convergence.

\subsection{Adaptive Weighting of the Fairness Loss}
While each level $t$ has a fairness loss $\mathcal{L}_{\text{fair}}^{(t)}$, we introduce a single scalar $\lambda_t$ that balances fairness vs. recommendation in the overall training objective.
A naive approach would arbitrarily fix $\lambda_t$ or define them manually for each $t$. 
However, simply fixing different fairness coefficients for each level manually (i.e., a naive approach) does not scale well when $T$ (the number of fairness levels) is large. 
Tuning each coefficient by hand would require intensive hyperparameter search for every new requirement, and it is not trivial to guarantee the desired degree of fairness improvement from level to level. 
In contrast, we introduce an adaptive weighting mechanism to dynamically adjust each fairness coefficient \(\lambda_t\) based on the current progress of the model training:
\begin{equation}
\label{eq:lambda_update}
\lambda_{t+1} = \lambda_t \;+\; \eta \;\Bigl[\,1 - \frac{\mathcal{L}_{\text{fair}}^{(t)} - \mathcal{L}_{\text{fair}}^{(t+1)}}{\mathcal{L}_{\text{fair}}^{(t)}}\Bigr],
\end{equation}
where \(\eta\) is a small learning rate for the fairness coefficients. Intuitively, if the fairness loss at level $t+1$ is not sufficiently improved over level $t$, then \(\lambda_{t+1}\) is decreased, making fairness more critical in the subsequent training updates. 
This adaptive weighting mechanism reduces manual effort and ensures a smooth fairness-accuracy trade-off across various levels.

\subsection{Overall Objective}
Combining all components, we minimize with respect to \(\Theta = \{\theta_s, \{\theta_p^{(t)}\}_{t=1}^T, \theta_o\}\) and maximize with respect to \(\theta_d\):
\begin{equation}
\label{eq:overall_obj}
\min_{\Theta} \max_{\theta_d} \Bigl\{ 
\mathcal{L} = \frac{1}T{}\sum_{t=1}^{T} \Bigl[\mathcal{L}_{\text{rec}}^{(t)} + \lambda_t \,\mathcal{L}_{\text{fair}}^{(t)}\Bigr]
\;+\;\beta \,\mathcal{L}_{\text{reg}} \Bigr\},
\end{equation}
where \(\beta\) is a hyperparameter to control the strength of the user-level regularization.

\subsection{Training Procedure and Implementation}
We alternate between updating the adversarial network $D$ to better predict $a_u$, and updating the shared and adapter networks to \emph{fool} $D$. We also periodically update $\lambda_t$ via Eq.~\eqref{eq:lambda_update} and back-propagate through the user-level regularization Eq.~\eqref{eq:user_reg}.

In each training epoch, Cofair processes user embeddings through $T$ parallel adapter modules, incurring $T$ forward passes per epoch. However, this does not substantially inflate overall runtime, as the shared representation layer is computed once and each adapter network is typically lightweight (single-layer MLP in our experiments).
At inference time, we feed $\mathbf{e}_u$ into \emph{one} of the $T$ adapter modules, selecting the desired fairness level. 
Since all $T$ fairness levels are learned simultaneously, Cofair only needs to train once, thereby eliminating the multiple full retraining runs required by other methods. This design leads to a practical and scalable solution, as also demonstrated by the efficiency measurements in Section~\ref{sec:efficiency_analysis}.

\section{Theoretical Analysis}
\label{sec:theorem}


We now establish that our adversarial fairness objective theoretically bounds demographic parity (DP) and that the user-level regularization enforces progressive fairness improvements.
Further, we discuss the extension to other fairness criteria by modifying our fairness objective accordingly.

\subsection{Bounding Demographic Parity}
\label{sec:bounding_dp}

Let \( \mathcal{Z}_0 \) and \( \mathcal{Z}_1 \) denote the distributions of the user representations \( \mathbf{e}_u^{(t)} \) conditioned on \( a_u = 0 \) and \( a_u = 1 \), respectively. For a function \( G \) mapping the representations to predictions \( \hat{y}_u^{(t)} = G(\mathbf{e}_u^{(t)}) \)\footnote{In recommender systems, the outcomes involve items. Here, without loss of generality, we focus on the predicted preference scores \( \hat{y}_u^{(t)} \) in our analysis for simplicity.}, the demographic parity difference is expressed as:

\begin{equation}
\Delta_{\text{DP}}^{(t)} \triangleq \left| \mathbb{E}_{\mathcal{Z}_1}[G(\mathbf{e}_u^{(t)})] - \mathbb{E}_{\mathcal{Z}_0}[G(\mathbf{e}_u^{(t)})] \right|.
\end{equation}

From Equation~\eqref{eq:fair_loss}, by omitting the logarithmic terms, the fairness loss over $\mathcal{Z}_0$ and $\mathcal{Z}_1$ is expressed as:

\begin{equation}
\mathcal{L}_{\text{fair}}^{(t)} = \mathbb{E}_{\mathcal{Z}_0} [ 1 - D(\mathbf{e}_u^{(t)}) ] + \mathbb{E}_{\mathcal{Z}_1} [ D(\mathbf{e}_u^{(t)}) ].
\end{equation}

Then, we have the following propositions.

\begin{lemma}
Consider any measurable function \( G : \mathbf{e}_u^{(t)} \to [0,1] \) representing the predicted preference. Then, at fairness level $t$, the demographic parity difference \( \Delta_{\text{DP}}^{(t)} \) is upper bounded by the optimal value of adversarial fairness objective:

\begin{equation}\label{eq:dp_upper_bound}
\Delta_{\text{DP}}^{(t)} \leq \sup_{D} \mathcal{L}_{\text{fair}}^{(t)}.
\end{equation}
\end{lemma}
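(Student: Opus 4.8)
The plan is to exploit the elementary fact that the supremum over all admissible discriminators is at least the value attained by any single, conveniently chosen discriminator, and to take that discriminator to be the prediction function $G$ itself together with its complement $1-G$. Since the adversary $D$ is trained with binary cross-entropy, it outputs a calibrated probability and therefore ranges over measurable maps into $[0,1]$; crucially, the hypothesis of the lemma assumes $G:\mathbf{e}_u^{(t)}\to[0,1]$ as well, so both $G$ and $1-G$ are themselves feasible choices for $D$. This admissibility is what makes the direct substitution argument go through.

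The first step is to rewrite the fairness objective in a form that isolates the group-expectation gap driven by $D$. Regrouping the expectations gives $\mathcal{L}_{\text{fair}}^{(t)} = 1 + \mathbb{E}_{\mathcal{Z}_1}[D(\mathbf{e}_u^{(t)})] - \mathbb{E}_{\mathcal{Z}_0}[D(\mathbf{e}_u^{(t)})]$, using $\mathbb{E}_{\mathcal{Z}_0}[1-D] = 1 - \mathbb{E}_{\mathcal{Z}_0}[D]$. Next I would instantiate the two feasible discriminators in turn. Substituting $D=G$ yields $\mathcal{L}_{\text{fair}}^{(t)} = 1 + \bigl(\mathbb{E}_{\mathcal{Z}_1}[G(\mathbf{e}_u^{(t)})] - \mathbb{E}_{\mathcal{Z}_0}[G(\mathbf{e}_u^{(t)})]\bigr)$, while substituting $D=1-G$ yields $\mathcal{L}_{\text{fair}}^{(t)} = 1 - \bigl(\mathbb{E}_{\mathcal{Z}_1}[G(\mathbf{e}_u^{(t)})] - \mathbb{E}_{\mathcal{Z}_0}[G(\mathbf{e}_u^{(t)})]\bigr)$. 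Using both choices is precisely what accounts for the two possible signs inside the absolute value defining $\Delta_{\text{DP}}^{(t)}$.

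To close, observe that $\sup_D \mathcal{L}_{\text{fair}}^{(t)}$ dominates the value of $\mathcal{L}_{\text{fair}}^{(t)}$ at each of these two feasible discriminators, hence it dominates their maximum. Since $\max\{1+x,\,1-x\} = 1+|x|$ with $x = \mathbb{E}_{\mathcal{Z}_1}[G(\mathbf{e}_u^{(t)})] - \mathbb{E}_{\mathcal{Z}_0}[G(\mathbf{e}_u^{(t)})]$, this maximum equals exactly $1 + \Delta_{\text{DP}}^{(t)}$. The trivial inequality $1 + \Delta_{\text{DP}}^{(t)} \ge \Delta_{\text{DP}}^{(t)}$ then gives the chain $\sup_D \mathcal{L}_{\text{fair}}^{(t)} \ge 1 + \Delta_{\text{DP}}^{(t)} \ge \Delta_{\text{DP}}^{(t)}$, which is the claimed bound.

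The only genuine subtlety, and the step I would watch most carefully, is confirming the admissibility of $G$ and $1-G$ as discriminators, i.e. that the $[0,1]$ codomain assumption on $G$ is used correctly and that the absolute value is handled by the two sign orientations rather than by an unjustified ``without loss of generality''. As an alternative route, one could pass through the total variation distance, using $\sup_{0\le D\le 1}\bigl(\mathbb{E}_{\mathcal{Z}_1}[D] - \mathbb{E}_{\mathcal{Z}_0}[D]\bigr) = \mathrm{TV}(\mathcal{Z}_1,\mathcal{Z}_0)$ together with $\Delta_{\text{DP}}^{(t)} \le \mathrm{TV}(\mathcal{Z}_1,\mathcal{Z}_0)$; I would prefer the direct substitution, however, since it avoids introducing total-variation machinery and makes the role of the $[0,1]$ assumption fully transparent.
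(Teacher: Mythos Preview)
Your proposal is correct and follows essentially the same approach as the paper: substitute $G$ (or $1-G$) for the discriminator $D$, observe that $\mathcal{L}_{\text{fair}}^{(t)}$ then equals $1+\Delta_{\text{DP}}^{(t)}$, and use $\sup_D\mathcal{L}_{\text{fair}}^{(t)}\ge 1+\Delta_{\text{DP}}^{(t)}\ge\Delta_{\text{DP}}^{(t)}$. The only cosmetic difference is that the paper invokes a ``without loss of generality'' on the sign of $\mathbb{E}_{\mathcal{Z}_0}[G]-\mathbb{E}_{\mathcal{Z}_1}[G]$ and plugs in only $D=1-G$, whereas you treat both orientations explicitly via $\max\{1+x,1-x\}=1+|x|$; your version is arguably cleaner for exactly the reason you flag.
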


\begin{proof}
Without loss of generality (WLOG), suppose $\mathbb{E}_{\mathcal{Z}_0}[G(\mathbf{e}_u^{(t)})] \geq \mathbb{E}_{\mathbf{e}_u^{(t)} \sim \mathcal{Z}_1}[G(\mathbf{e}_u^{(t)})]$, then $\Delta_{\text{DP}}^{(t)} =  \mathbb{E}_{\mathcal{Z}_0}[G(\mathbf{e}_u^{(t)})] - \mathbb{E}_{\mathbf{e}_u^{(t)} \sim \mathcal{Z}_1}[G(\mathbf{e}_u^{(t)})] $.
Consider an adversary \( D(\mathbf{e}_u^{(t)}) = 1 - G(\mathbf{e}_u^{(t)}) \). Then, we have:

\begin{align}
\mathcal{L}_{\text{fair}}^{(t)} &= \mathbb{E}_{\mathcal{Z}_0} [ 1 - (1 - G(\mathbf{e}_u^{(t)}))  ] + \mathbb{E}_{\mathcal{Z}_1} [ 1 - G(\mathbf{e}_u^{(t)})  ] \\
&= \mathbb{E}_{\mathcal{Z}_0} [ G(\mathbf{e}_u^{(t)}) ] + \mathbb{E}_{\mathcal{Z}_1} [  (1 - G(\mathbf{e}_u^{(t)})  ]\\
&= 1 - \mathbb{E}_{\mathcal{Z}_1} [ G(\mathbf{e}_u^{(t)} ) ] + \mathbb{E}_{\mathcal{Z}_0} [ G(\mathbf{e}_u^{(t)} ) ] \\
&= 1 + \left( \mathbb{E}_{\mathcal{Z}_0} [ G(\mathbf{e}_u^{(t)} ) ] - \mathbb{E}_{\mathcal{Z}_1} [ G(\mathbf{e}_u^{(t)} ) ] \right)\\
&= \Delta_{\text{DP}}^{(t)} + 1 \;\;\geq\;\; \Delta_{\text{DP}}^{(t)}.
\end{align}

Since \(\sup_{D} \mathcal{L}_{\text{fair}}^{(t)} \geq \mathcal{L}_{\text{fair}}^{(t)} \), we have:

\begin{equation}
\Delta_{\text{DP}}^{(t)} \leq \sup_{D} \mathcal{L}_{\text{fair}}^{(t)}.
\end{equation}

This completes the proof.
\end{proof}

\noindent This lemma establishes that minimizing the adversarial fairness loss \( \mathcal{L}_{\text{fair}}^{(t)} \) leads to a reduction in demographic parity difference \( \Delta_{\text{DP}}^{(t)} \).

\subsection{Progressive Fairness Enforcement}

We now present the main theorem, asserting that our method achieves a monotonic improvement in fairness performance across fairness levels.

\begin{theorem}

For any user $u \in \mathcal{U}$, let $\mathcal{L}_{\text{fair}}^{(t)}(u) = -\ell_{\text{BCE}}(D(\mathbf{e}_u^{(t)}), a_u)$ be the user-level adversarial loss at fairness level $t$. If the minimizer of Eq.~\eqref{eq:overall_obj} is achieved, then for each \( t \in \{1, 2, \dots, T-1\} \):

\begin{equation}
\Delta_{\text{DP}}^{(t+1)}(u) \leq \Delta_{\text{DP}}^{(t)}(u).
\end{equation}

\end{theorem}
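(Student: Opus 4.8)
The plan is to prove the theorem in two stages: first reduce the claimed monotonicity of the per-user demographic parity $\Delta_{\text{DP}}^{(t)}(u)$ to a monotonicity statement on the per-user adversarial loss $\mathcal{L}_{\text{fair}}^{(t)}(u)$, and then invoke the preceding lemma to transfer that ordering from the loss onto the DP. Concretely, I would first argue that at the minimizer of Eq.~\eqref{eq:overall_obj} the per-user losses are non-increasing in $t$, i.e. $\mathcal{L}_{\text{fair}}^{(t+1)}(u) \le \mathcal{L}_{\text{fair}}^{(t)}(u)$ for every $u$ and every $t \in \{1,\dots,T-1\}$, and then show that this inequality, combined with a per-user specialization of the bound $\Delta_{\text{DP}}^{(t)} \le \sup_D \mathcal{L}_{\text{fair}}^{(t)}$ established in Section~\ref{sec:bounding_dp}, yields $\Delta_{\text{DP}}^{(t+1)}(u) \le \Delta_{\text{DP}}^{(t)}(u)$.

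For the first stage I would lean entirely on the user-level regularizer in Eq.~\eqref{eq:user_reg}. The key property is that $\text{softplus}$ is strictly increasing and strictly positive, with $\text{softplus}(x) \to 0$ as $x \to -\infty$, so it penalizes any positive gap $\mathcal{L}_{\text{fair}}^{(t+1)}(u) - \mathcal{L}_{\text{fair}}^{(t)}(u) > 0$ and is minimized by driving that gap to be non-positive. I would then argue by a local exchange/contradiction argument: because the adapters $\theta_p^{(t)}$ are parameterized independently across levels, if at the purported minimizer some user's loss strictly increased from level $t$ to level $t+1$, one could perturb $\theta_p^{(t+1)}$ toward the configuration realized at level $t$ to strictly decrease the corresponding softplus term. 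Combined with the stationarity conditions of Eq.~\eqref{eq:overall_obj}, this forces the difference to be non-positive at optimality, giving the desired loss monotonicity.

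For the second stage I would apply the preceding lemma at the level of a single user. Since the inner maximization in Eq.~\eqref{eq:overall_obj} is taken over the adversary $\theta_d$, at the saddle point $D$ is the optimal discriminator, so the per-user DP is controlled by the per-user adversarial loss through the same adversary-substitution argument used in the lemma (taking $D = 1 - G$ restricted to user $u$). Applying this bound at levels $t$ and $t+1$ and then using the loss monotonicity from the first stage propagates the ordering to $\Delta_{\text{DP}}^{(t)}(u)$, completing the proof.

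I expect the main obstacle to be the first stage, namely establishing that the minimizer actually realizes the monotone loss profile rather than a compromise dictated by the recommendation loss $\mathcal{L}_{\text{rec}}^{(t)}$ and the averaged fairness term. The regularizer only pushes toward monotonicity; showing it is decisive at the optimum requires either a sufficiently large weight $\beta$ (a large-$\beta$ limit in which the softplus penalties dominate) or a careful argument that the independent adapters allow the monotone configuration to be attained without increasing the remaining objective terms. A secondary subtlety is that the lemma provides only a one-sided upper bound on DP, so transferring a strict ordering of losses into an ordering of DP values relies on this bound being the operative quantity at the optimal adversary; I would make that tightness explicit when specializing the lemma to each user.
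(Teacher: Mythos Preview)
Your two-stage plan is more careful than the paper's own argument, but the underlying mechanism is the same: the softplus regularizer $\mathcal{L}_{\text{reg}}$ is what drives the monotonicity at the minimizer. The paper's proof is terser and skips your decomposition entirely. It simply rewrites $\mathcal{L}_{\text{reg}}$ with $\Delta_{\text{DP}}^{(t)}(u)$ substituted directly for $\mathcal{L}_{\text{fair}}^{(t)}(u)$ inside the softplus (implicitly identifying the per-user adversarial loss with the per-user DP), notes that softplus is convex with sigmoid derivative and hence penalizes any violation $\Delta_{\text{DP}}^{(t+1)}(u) > \Delta_{\text{DP}}^{(t)}(u)$, and concludes that ``at convergence'' minimizing $\mathcal{L}_{\text{reg}}$ forces the claimed inequality for all $u$ and $t$.

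In particular, the paper does \emph{not} supply the exchange/contradiction argument you sketch for Stage~1, and it does not engage with the one-sidedness of the lemma that you flag in Stage~2; the loss--DP identification is treated as immediate rather than mediated through Eq.~\eqref{eq:dp_upper_bound}. So your decomposition is a genuine refinement: you correctly isolate the two heuristic steps (whether the regularizer is actually decisive at the joint optimum given the competing $\mathcal{L}_{\text{rec}}^{(t)}$ terms, and whether an upper bound is enough to transfer an ordering), whereas the paper simply asserts the conclusion from convergence of $\mathcal{L}_{\text{reg}}$ alone. Your caveats about large $\beta$ and tightness of the adversarial bound are apt, and neither is addressed in the paper's proof.
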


\begin{proof}
From the user-level regularization term \( \mathcal{L}_{\text{reg}} \), we have:

\begin{equation}
\mathcal{L}_{\text{reg}} = \sum_{u \in \mathcal{U}} \sum_{t=1}^{T-1} \text{softplus}\left( \Delta_{\text{DP}}^{(t+1)}(u) - \Delta_{\text{DP}}^{(t)}(u) \right).
\end{equation}

The softplus function \( \text{softplus}(x) = \ln(1 + e^{x}) \) is convex and continuously differentiable, with a derivative \( \sigma(x) = \frac{1}{1 + e^{-x}} \). This regularization penalizes cases where \( \Delta_{\text{DP}}^{(t+1)}(u) > \Delta_{\text{DP}}^{(t)}(u) \).

At convergence, the optimization process seeks to minimize \( \mathcal{L}_{\text{reg}} \), which implies:

\begin{equation}
\Delta_{\text{DP}}^{(t+1)}(u) \leq \Delta_{\text{DP}}^{(t)}(u), \quad \forall u \in \mathcal{U}, \ \forall t \in \{1, 2, \dots, T-1\}.
\end{equation}

Therefore, the fairness performance improves (or remains the same) as the fairness level increases.
\end{proof}

\subsection{Extension to Other Fairness Criteria}\label{sec:other_fairness_criteria}
While our main theoretical focus is on demographic parity, the framework is general. For instance, \emph{Equal Opportunity} (EOpp) demands that true positive rates (TPRs) be equal across groups~\cite{DBLP:conf/nips/HardtPNS16}. We can adapt our adversarial term to incorporate labels of user relevance (or positive feedback) when training $D$ so that it specifically penalizes differences conditional on $y=1$. Formally, one may replace $\ell_{\text{BCE}}$ in Eq.~\eqref{eq:fair_loss} with a specialized loss that predicts $a_u$ \emph{only among users with $y_u=1$}, or incorporate a step that conditions on positive interactions. The user-level regularization in Eq.~\eqref{eq:user_reg} similarly ensures each user’s fairness with respect to EOpp does not degrade at higher $t$. Our theoretical bounding arguments extend under standard assumptions that the new fairness loss upper-bounds the chosen fairness metric~\cite{DBLP:conf/icml/MadrasCPZ18}. Hence, although we focus on DP for clarity, the proposed ``control on the fly’’ strategy naturally accommodates a range of fairness definitions.

While our theoretical analysis assumes the existence of an optimal adversary, in practice, the empirical performance may be influenced by the training dynamics of min-max optimization and the choice of hyperparameters $\beta$ and $\eta$.

\section{Experiments}
\label{sec:experiments}




    
    


We structure our empirical analysis for the following questions:
\begin{itemize}[left=5pt]
    \item \textbf{RQ1:} Can Cofair achieve various fairness-accuracy trade-offs without retraining, and how does it compare to baselines?
    
    \item \textbf{RQ2:} How do the components of our proposed Cofair contribute to the accuracy and fairness performance across levels?
    
    \item \textbf{RQ3:} How do different hyperparameter configurations influence the performance of our proposed Cofair?
    
    \item \textbf{RQ4:} Can Cofair enable post-training controllability of fairness for other fairness methods, demonstrating its generalizability?
    
    \item \textbf{RQ5:} Does Cofair reduce computational overhead?
\end{itemize}

\subsection{Experimental Setup}

\subsubsection{Datasets}

Experiments were conducted on two public datasets:  
\textbf{Movielens-1M~\cite{harper2015movielens}} is a classic movie-rating dataset with dense user-item interactions. We treat any rating above 0 as a positive interaction, following~\cite{fairmi, islam2021debiasing}.
\textbf{Lastfm-360K~\cite{celma2009music}} is a large music recommendation dataset with play records of users from Last.FM. We performed 20-core filtering and sampled a subset following~\cite{fairmi}.

We treat \textit{gender} as sensitive attribute, which is the most widely considered sensitive attribute in the fairness literature~\cite{wang2022survey, lifairness, deldjoo2024fairness}. 
The statistics of the processed datasets are shown in Table~\ref{tab:statistics}.

\begin{table}[htbp]
\centering
\small
\caption{Statistics of the datasets used in experiments.}
\begin{tabular}{@{}l|cccc@{}}
\toprule Dataset&\#Interactions&\#Users&\#Items&Sparsity\\
\midrule
Movielens-1M&$1,000,209$&6,040&3,706&$95.53\%$\\
Lastfm-360K&$2,261,740$&48,386&36,775&$99.87\%$\\
\bottomrule
\end{tabular}
\label{tab:statistics}
\vspace{-2mm}
\end{table}

\subsubsection{Evaluation Protocols}

We evaluate recommendation accuracy with two widely adopted ranking metrics \textbf{Recall@10}~\cite{gunawardana2009survey} and \textbf{NDCG@10}~\cite{jarvelin2017ir}. \textit{Larger} values indicate better recommendation accuracy. We evaluate fairness performance with two widely used group fairness metrics in recommender systems, \textbf{DP@10}~\cite{fairmi} and \textbf{EOpp@10}~\cite{fairmi}, where DP@10 measures if the items in the Top-10 recommendations are distributed equally across user groups and EOpp@10 measures if \textit{relevant} items in the Top-10 recommendations are distributed equally across groups. \textit{Lower} values indicate better fairness performance. We validate our findings by calculating $p$-values to assess statistical significance against the best baseline.





\subsubsection{Base Models}
Following previous research~\cite{ZCL23,ZCH24,ZCH25}, we evaluate fairness methods on two representative recommender backbones: 
\textbf{BPR}~\cite{BPR} that learns user and item embeddings by maximizing pairwise rankings, and \textbf{LightGCN}~\cite{he2020lightgcn} that is a graph-based collaborative filtering framework to refine embeddings via message passing over user-item bipartite graphs.

\subsubsection{Baselines}

We compare our method against the following fairness baselines: \textbf{ComFair~\cite{bose2019compositional}} applies compositional adversarial learning to eliminate sensitive multi-attribute information in user representations via a min-max game. \textbf{FairRec~\cite{FairRec}} decomposes adversarial learning to generate a bias-free user representation with minimized sensitive information and a bias-aware user representation with maximized sensitive information, aiming to obscure sensitive information in recommendation. \textbf{FairGo~\cite{FairGo}} applies compositional filters to both user and item representations, and applies discriminators to explicit user representation and graph-based high-order user representation. \textbf{AFRL~\cite{zhu2024adaptive}} adaptively learns fair representations by treating fairness requirements as inputs, preserves non-sensitive information, and incorporates debiased embeddings to balance fairness and accuracy.

\subsubsection{Implementation Details}
For a fair comparison, all methods are in the same optimization setting: the latent space dimension of 64, the Adam optimizer~\cite{DBLP:journals/corr/KingmaB14} with learning rate of 0.001, batch size of 4096, the 1:1 negative sampling strategy~\cite{wei2021contrastive}, and Xavier initialization~\cite{glorot2010understanding}.
The dimensions $d_s$ and $d_p$ are both 64, matching the latent dimension $d$.
The hyperparameters $\lambda_{0}$, $\eta$, and $\beta$ are tuned within the range of $(0,1]$, based on performance on a validation set.
The shared representation layer $S$, the adapter modules $P^{(t)}$, and the output layer are each implemented as a single-layer feedforward neural network (i.e., 1-layer MLP). 
The adversarial network $D$ is implemented using a 2-layer MLP with LeakyReLU activation~\cite{maas2013rectifier} and a dropout rate of 0.2.
We set $T = 5$ to evaluate fairness-accuracy trade-offs at 5 different fairness levels.
For baselines, we adopt the hyperparameter settings recommended by authors or provided in public implementations or fine-tune them for best performance.

\vspace{-2mm}


\subsection{Overall Performance (RQ1)}

\begin{figure*}[htbp]
    \centering
    \includegraphics[width=0.95\linewidth]{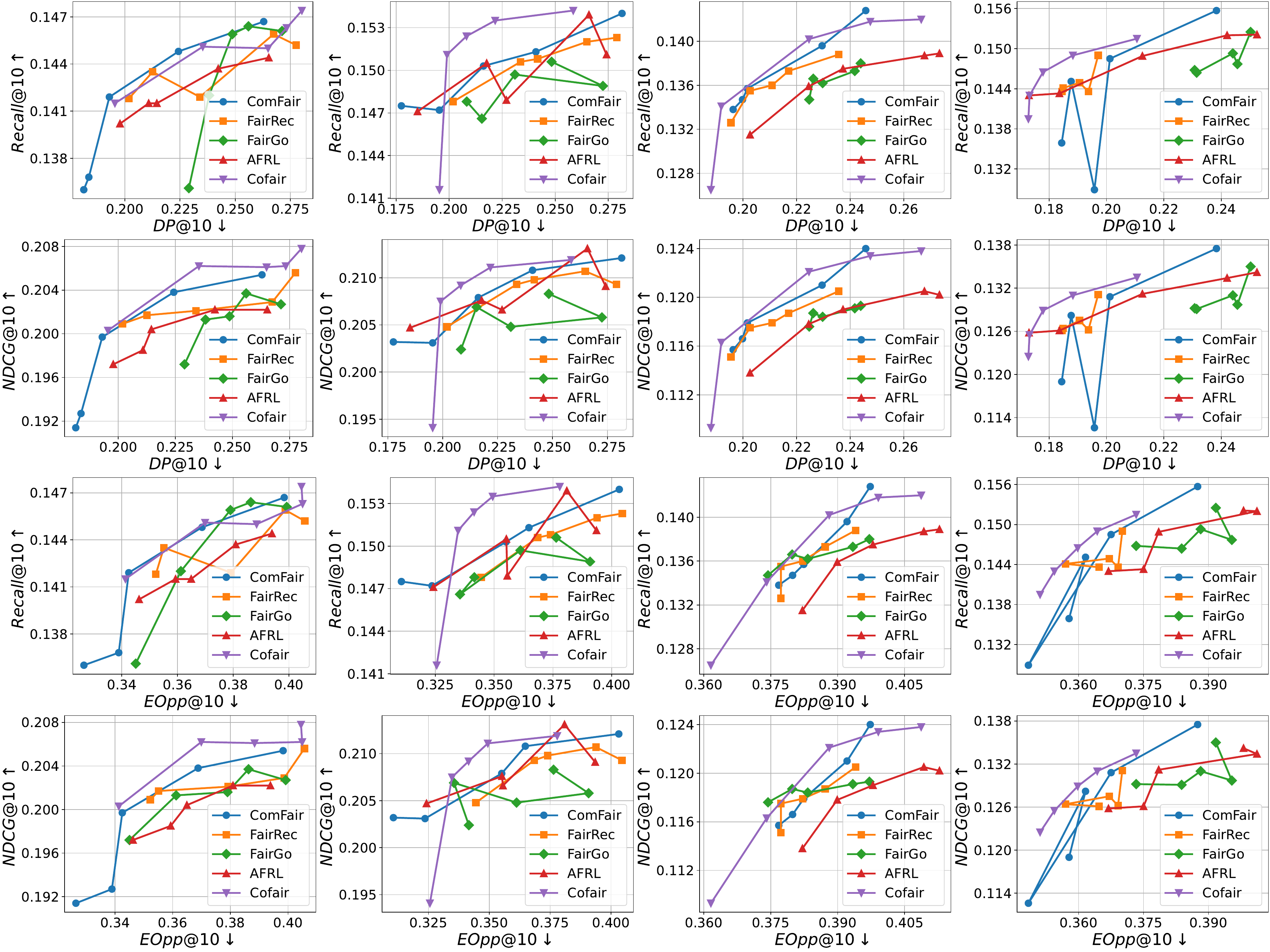}
    
    \begin{minipage}[t]{0.28\linewidth}
        \centering
        \textbf{(a)} Movielens-1M \\ (BPR as backbone)
    \end{minipage}
    \begin{minipage}[t]{0.22\linewidth}
        \centering
        \textbf{(b)} Movielens-1M \\ (LightGCN as backbone)
    \end{minipage}
    \begin{minipage}[t]{0.27\linewidth}
        \centering
        \textbf{(c)} Lastfm-360K \\ (BPR as backbone)
    \end{minipage}
    \hfill
    \begin{minipage}[t]{0.19\linewidth}
        \centering
        \textbf{(d)} Lastfm-360K \\ (LightGCN as backbone)
    \end{minipage}
    
    \vspace{-2mm}
    \caption{Fairness-accuracy curves, where points closer to the left (indicating greater fairness) and the top (indicating higher accuracy), are more Pareto efficient. Notably, the results for baselines are obtained after multiple training, while the results for our Cofair are obtained after single training and multiple forward pass by indicating different values of $t$.}
    \label{fig:overall_performance}
    \vspace{-2mm}
\end{figure*}


Figure~\ref{fig:overall_performance} illustrates the fairness-accuracy performance of our proposed \textbf{Cofair} method compared to four baselines on two benchmark datasets, MovieLens-1M and Lastfm-360K, each examined with two different recommendation backbones (BPR and LightGCN). 
Points in curves that are closer to the left (greater fairness) and higher on the y-axis (stronger accuracy) are generally considered more Pareto-efficient. Note that all baseline curves are obtained by training the models multiple times under different hyperparameter configurations for each fairness level, whereas \textbf{Cofair} generates the entire curve in a single training pass, with multiple inference phases (varying $t$).
Below are three key observations:
\vspace{-1mm}
\begin{itemize}[leftmargin=8pt,labelindent=0pt]
    \item Cofair achieves the most Pareto-efficient trade-offs between fairness and accuracy in 15 out of 16 total comparisons. 
    These improvements over the best baseline(s) are validated to be significant using a two-sided t-test with $p < 0.05$.
    The sole exception is Recall vs. DP with BPR on MovieLens-1M, where Cofair ranks second but remains competitively close to the best-performing method. This overall dominance suggests Cofair effectively balances fairness constraints and recommendation quality across multiple levels without retraining. 

    \item Cofair typically spans a wider range of fairness values (DP@10 and EOpp@10) in a single training run than baselines, which must be retrained multiple times to produce similar curves. Notably, under certain configurations (e.g., LightGCN on Lastfm-360K), some baselines may match or slightly surpass Cofair in the range of attainable fairness. Nevertheless, in most scenarios, Cofair exhibits greater flexibility to navigate trade-offs between extreme fairness and high accuracy. 

    \item While the fairness range of Cofair under the LightGCN backbone on Lastfm-360K is somewhat less extensive than in other settings, it consistently achieves the best DP@10 or EOpp@10 measure at each fairness level with minimal accuracy loss. We conjecture that this is partly due to the inherent adaptability of LightGCN’s embeddings, which enables Cofair to enforce fairness constraints more uniformly across users. Consequently, Cofair attains highly competitive fairness-accuracy trade-offs in this case.
\end{itemize}

In conclusion, these results demonstrate that Cofair offers strong and flexible fairness-accuracy trade-offs across different datasets, backbones, and fairness requirements, without the need for repeated retraining. The framework thus emerges as an effective, efficient, and flexible route to incorporate fairness constraints in recommender systems while maintaining robust performance.

\subsection{Ablation Study (RQ2)}

\begin{table}[t]
\centering
\caption{Ablation study of our proposed method (Cofair) versus its variants without specific components. 
}
\vspace{-2mm}
\label{tab:ablation}
\resizebox{0.95\linewidth}{!}{
\begin{tabular}{lccccccc}
\toprule
\multirow{2}{*}{\textbf{Method}} & \multicolumn{7}{c}{\textbf{NDCG@10 (\textit{larger} is better)}}\\
\cmidrule(lr){2-8}
& $t=1$ & $t=2$ & $t=3$ & $t=4$ & $t=5$ & \textbf{avg.} & \textbf{std.} \\ 
\midrule
\emph{w/o SRL} & 0.2050 & 0.2044 & 0.2051 & 0.2027 & 0.1945 & 0.2023 & \textbf{0.0040} \\ 
\emph{w/o FCA} & 0.2044 & 0.2044 & 0.2044 & \underline{0.2044} & \underline{0.2044} & 0.2044 & 0.0000  \\ 
\emph{w/o AWL} & \underline{0.2073} & \textbf{0.2069} & \textbf{0.2073} & \textbf{0.2076} & \textbf{0.2063} & \textbf{0.2071} & 0.0004  \\ 
\emph{w/o URL} & \textbf{0.2080} & \underline{0.2067} & \underline{0.2064} & 0.2033 & 0.2028 & \underline{0.2054} & 0.0020    \\ 
\textbf{Cofair} & 0.2015 & 0.1996 & 0.1980 & 0.1987 & 0.1913 & 0.1978 & \underline{0.0035}  \\ 
\midrule
\textbf{Improv.} & \cellcolor{gray!15}{-3.13\%} & \cellcolor{gray!15}{-3.43\%} & \cellcolor{gray!15}{-4.07\%} & \cellcolor{gray!15}{-2.26\%} & \cellcolor{gray!15}{-5.67\%} & \cellcolor{gray!15}{-3.70\%} & \cellcolor{gray!15}{+75.00\%} \\
\midrule
\multirow{2}{*}{\textbf{Method}} & \multicolumn{7}{c}{\textbf{DP@10 (\textit{smaller} is better)}}\\
\cmidrule(lr){2-8}
& $t=1$ & $t=2$ & $t=3$ & $t=4$ & $t=5$ & \textbf{avg.} & \textbf{std.} \\
\midrule
\emph{w/o SRL} & 0.2828 & 0.2716 & 0.2623 & \underline{0.2505} & 0.2403 & 0.2615 & 0.0150  \\
\emph{w/o FCA} & \textbf{0.2615} & \underline{0.2615} & \underline{0.2615} & 0.2615 & 0.2615 & 0.2615 & 0.0000  \\
\emph{w/o AWL} & 0.2802 & 0.2785 & 0.2832 & 0.2803 & 0.2783 & 0.2801 & 0.0018 \\
\emph{w/o URL} & 0.2727 & 0.2691 & 0.2652 & 0.2601 & \underline{0.2068} & \underline{0.2548} & \underline{0.0244}   \\
\textbf{Cofair} & \underline{0.2707} & \textbf{0.2508} & \textbf{0.2329} & \textbf{0.2017} & \textbf{0.1891} & \textbf{0.2290} & \textbf{0.0302}    \\
\midrule
\textbf{Improv.} 
& \cellcolor{gray!15}{+0.73\%} 
& \cellcolor{gray!15}{+6.80\%}  
& \cellcolor{gray!15}{+12.18\%} 
& \cellcolor{gray!15}{+22.46\%} 
& \cellcolor{gray!15}{+8.56\%}  
& \cellcolor{gray!15}{+10.13\%}
& \cellcolor{gray!15}{+23.77\%} \\
\bottomrule
\end{tabular}
}
\end{table}

Overall, Table~\ref{tab:ablation} illustrates that \textbf{Cofair} achieves significantly lower demographic parity difference (DP@10) than its ablated variants, reducing DP@10 by 10.13\% on average relative to the best competitor (``w/o URL''). This improvement comes at the cost of a moderate 3.70\% decrease in NDCG@10, which aligns with our design goal of enforcing stronger fairness constraints.

We further analyze the contribution of each component. Removing the shared representation layer (``w/o SRL'') leads to a notable decrease in fairness, indicating its role in capturing common patterns. The fairness-conditioned adapters (FCA) are critical for controllability; without them (``w/o FCA''), the performance collapses to a single regime with identical results across all $t$. Similarly, removing the adaptive weighting loss (``w/o AWL'') results in static fairness coefficients that fail to show clear progression as $t$ increases. Finally, the user-level regularization (``w/o URL'') is essential for user-level consistency; its absence weakens the fairness improvement at higher levels, confirming the effectiveness of enforcing per-user constraints.

\subsection{Hyperparameter Analysis (RQ3)}

We investigate the effects of hyperparameters $\lambda_0$, $\eta$, and $\beta$ on the mean ($\mu$) and variance ($\sigma^2$) of performance across fairness levels, as shown in Figure~\ref{fig:param_sensitivity}. For the initial fairness coefficient $\lambda_0$, increasing it leads to improved average fairness (lower $\mu_{DP}$) but diminished differentiation across levels (lower $\sigma^2_{DP}$). Since larger $\lambda_0$ also results in higher accuracy instability (higher $\sigma^2_{NDCG}$), a smaller $\lambda_0$ is preferred to balance fairness differentiation with stable accuracy.

\begin{figure}[htbp]
    \centering
    \includegraphics[width=0.95\linewidth]{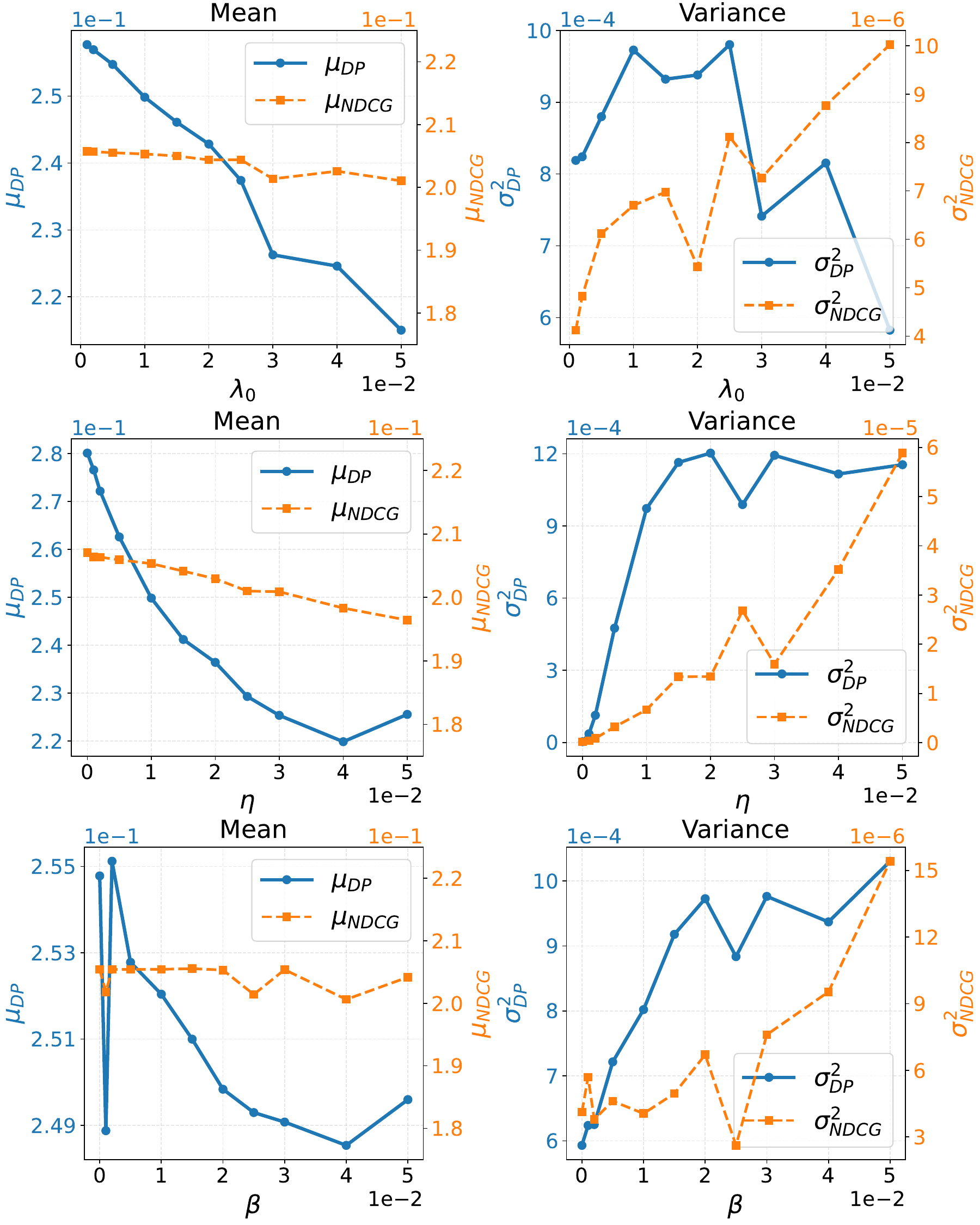}
    \vspace{-2mm}
    \caption{Means and variances of DP ($\mu_{DP}$ and $\sigma^2_{DP}$) and NDCG ($\mu_{NDCG}$ and $\sigma^2_{NDCG}$) on BPR backbone and Movielens-1M dataset with different values of $\lambda_0$, $\eta$, and $\beta$, respectively.}
    \label{fig:param_sensitivity}
    \vspace{-2mm}
\end{figure}

Regarding the update rate $\eta$, larger values facilitate stronger fairness and distinct differentiation (higher $\sigma^2_{DP}$) but at the cost of reduced accuracy, suggesting a moderate $\eta$ yields the optimal trade-off. Similarly, increasing the user-level regularization weight $\beta$ improves fairness parallel to $\eta$. However, its impact on accuracy is less strictly monotonic, implying that a medium $\beta$ can effectively enhance user-level fairness constraints without ensuring a significant drop in average recommendation performance.

\subsection{Framework Study (RQ4)}

\begin{figure}[htbp]
    \centering
    \includegraphics[width=0.95\linewidth]{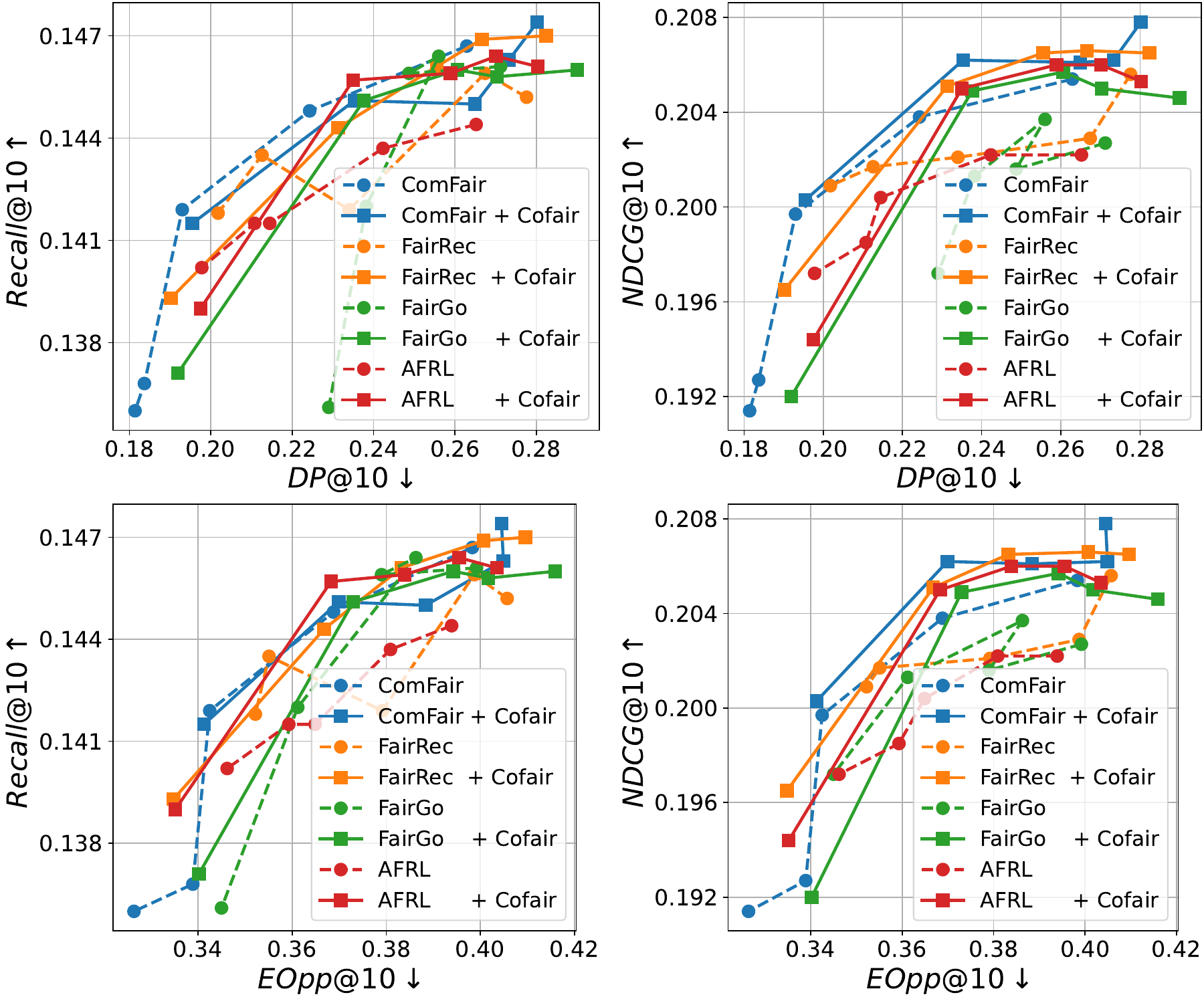}
    \vspace{-2mm}
    \caption{Comparison of four fairness methods without and with our \emph{controllable fairness} framework (indicated by “+ Cofair”), using BPR as backbone on MovieLens-1M.}
    \label{fig:framework_study}
\end{figure}

%

We validate the generality of Cofair by integrating it with four representative methods: ComFair, FairRec, FairGo, and AFRL. Figure~\ref{fig:framework_study} compares the original methods (which require retraining for each point) against their Cofair-augmented versions (``+ Cofair''). We highlight two key observations. Firstly, unlike the baselines that necessitate repeated retraining, the augmented methods achieve dynamic fairness control in a single run. By simply adjusting $t$ during inference, Cofair spans a broader range of fairness values, enabling fine-grained control based on real-time needs. Secondly, the augmented methods consistently maintain or surpass the Pareto efficiency of the original baselines. Notably, they often achieve superior fairness metrics (lower DP@10 or EOpp@10) while preserving competitive accuracy (NDCG). This confirms that Cofair serves as an effective, plug-and-play framework to inject flexibility into existing models without performance degradation.

\vspace{-2mm}

\subsection{Efficiency Analysis (RQ5)}
\label{sec:efficiency_analysis}


\begin{table}[htpb]
\begin{center}
\small
\caption{Average training time per epoch (in seconds) and the average number of best epochs for fairness methods.
}  
\begin{tabular}{@{}llrrrrc@{}}
\toprule
\multicolumn{2}{c}{\multirow{2}{*}{\textbf{Method}}} &\multicolumn{2}{c}{Movielens-1M} &\multicolumn{2}{c}{Lastfm-360K}  \\ 
\cmidrule(lr){3-4} \cmidrule(lr){5-6}  & & Time & Epoch & Time & Epoch  \\ \midrule

\multirow{4}{*}{\textbf{BPR}} & + ComFair & 3.10 & 78.57 & 5.15 & 132.00 \\
& + FairRec & 3.06 & 44.00 & 16.36 & 141.67 \\
& + FairGo & 9.66 & 114.38 & 8.68 & 175.33 \\
& + AFRL & 2.81 & 57.14 & 7.56 & 103.50 \\
& + Cofair & 4.79 & 7.44 & 12.24 & 26.04 \\

\midrule

\multirow{4}{*}{\textbf{LightGCN}} & + ComFair & 3.64 & 129.00 & 23.93 & 134.50 \\
& + FairRec & 10.42 & 122.50 & 27.93 & 185.56 \\
& + FairGo & 13.26 & 99.00 & 22.57 & 215.67 \\
& + AFRL & 3.53 & 40.00 & 11.33 & 55.00 \\
& + Cofair & 7.55 & 11.04 & 27.79 & 35.58 \\

\bottomrule
\end{tabular}
\label{tbl:efficiency-results}
\end{center}
\vspace{-5mm}
\end{table}

To evaluate the practical efficiency of our Cofair relative to existing fairness baselines, we perform all experiments on the same machine equipped with a single NVIDIA Tesla V100 GPU. Table~\ref{tbl:efficiency-results} presents the training time per epoch and the average number of epochs required to produce five different fairness levels by each method.

Notably, Cofair requires only about one-fifth the number of epochs used by competing approaches, although it incurs approximately twice the per-epoch time relative to the fastest baseline. Nevertheless, this reduced epoch requirement leads to substantial overall time savings, demonstrating that Cofair can efficiently generate multiple fairness configurations within a single training cycle. These findings highlight Cofair’s scalability to large recommender systems, where repeated retraining to explore multiple fairness-accuracy trade-offs would be computationally prohibitive.


\section{Related Work}
\label{sec:related}

\subsection{Fairness in Recommendation}

Extensive studies have highlighted that recommender systems can perpetuate and amplify societal biases, leading to unfair treatment of users based on sensitive attributes~\cite{yoo2024ensuring, xu2024fairsync, zhang2023fairlisa, chen2025leave}. 
Notably, recent work has comprehensively investigated such user-side fairness across outcome and process dimensions~\cite{chen2025investigating}.
Various fairness definitions have been proposed to address these concerns~\cite{wang2022survey, lifairness}. 
Individual fairness ensures similar predictions for similar individuals regardless of their sensitive attributes~\cite{biega2018equity}, while envy-free fairness~\cite{EnvyFree} requires that users should be free of envy on others' recommendations over their own. 
Counterfactual fairness~\cite{DBLP:conf/nips/KusnerLRS17, PCFR} requires consistent recommendation distributions across actual and counterfactual worlds where users' sensitive attributes are intervened.
Among these, group fairness has emerged as the most extensively studied paradigm due to its intuitive interpretation and direct focus on equal treatment across demographic groups~\cite{zemel2013learning, fairmi}. Group fairness primarily addresses equity among user groups with different sensitive attributes in terms of recommendation distribution or performance metrics~\cite{lifairness, wang2022survey}. Notably, demographic parity~\cite{kamishima2011fairness}, which promotes similar treatment across groups in both rating-based~\cite{kamishima2011fairness} and ranking-based~\cite{FairGo} scenarios.
Equal opportunity~\cite{DBLP:conf/nips/HardtPNS16} considers true user preferences in addition, with corresponding metrics for rating-based~\cite{FOCF} and ranking-based~\cite{fairmi} systems.
To achieve group fairness, researchers have developed various approaches, including regularization-based methods~\cite{FOCF, togashi2024scalable, shao2024average}, adversarial learning techniques~\cite{bose2019compositional, PCFR, yang2024distributional}, and re-ranking strategies~\cite{UGF, xu2023p}. 
For instance, FOCF~\cite{FOCF} directly optimizes fairness metrics through regularization, while ComFair~\cite{bose2019compositional} employs adversarial networks to eliminate sensitive information from user representations. 
UGF~\cite{UGF} addresses user unfairness through integer programming-based re-ranking. 
However, these approaches generally lack post-training flexibility, requiring complete model retraining to accommodate different fairness requirements.

\vspace{-2mm}

\subsection{Controllable Fairness}

Recent advances in controllable fairness have pursued two main directions. The first focuses on attribute-level control, allowing stakeholders to select which sensitive attributes to protect after training. Li et al.~\cite{PCFR} pioneered personalized fairness by enabling users to specify which sensitive attributes should be independent of their recommendations. AFRL~\cite{zhu2024adaptive} extended this concept through information alignment, treating fairness requirements as inputs for adaptive fair representation learning. However, these approaches offer limited control over the magnitude of fairness enforcement during inference.
The second direction emphasizes theoretical guarantees through constrained optimization. Several works~\cite{song2019learning, cui2023controllable, gupta2021controllable} achieve controllable fairness by allowing stakeholders to specify unfairness limits prior to training. For example,~\cite{song2019learning} derives tractable bounds connecting to specified fairness constraints for controllable optimization, while~\cite{cui2023controllable} enables both unfairness limit specification during training and demographic group selection post-training. While these approaches provide theoretical guarantees through constrained optimization, their controllability remains confined to the training phase, necessitating model retraining to achieve different fairness levels in practice.


\section{Conclusion}
\label{sec:conclusion}

In this paper, we addressed the critical challenge of post-training fairness inflexibility in recommender systems by introducing Cofair, a single-train framework enabling dynamic fairness adjustments after training. Our approach eliminates the need for repeated full retraining whenever stakeholders adjust fairness requirements. The framework's architecture combines a shared representation layer with fairness-conditioned adapters, effectively capturing both universal user characteristics and fairness-specific patterns across different fairness levels. The shared representation layer establishes a foundation for balancing accuracy and fairness across various settings, while the fairness-conditioned adapters fine-tune user representations according to specific fairness requirements. To ensure monotonic fairness enforcements, we implemented a user-level regularization loss that prevents fairness degradation for individual users as fairness levels increase. Our theoretical analysis demonstrates that Cofair's adversarial fairness objective provides an upper bound for the fairness criterion (e.g., demographic parity), while the user-level regularization ensures monotonic enhancement of fairness metrics. Through comprehensive experimental evaluation against state-of-the-art baselines, we validated that Cofair achieves controllable fairness across multiple levels while maintaining competitive fairness-accuracy trade-offs, all without the computational burden of model retraining.

Future research directions could explore the integration of multiple fairness definitions into a unified framework~\cite{wu2022multi, wang2024intersectional, wu2021tfrom}. While our current approach can be adapted to different fairness notions, developing a structured framework that explicitly leverages the relationships among various fairness definitions remains challenging. 


\begin{acks}
This work is supported by Hong Kong Baptist University Key Research Partnership Scheme (KRPS/23-24/02) and NSFC/RGC Joint Research Scheme (N\_HKBU214/24).

\end{acks}

\balance
\clearpage
\bibliographystyle{ACM-Reference-Format}
\bibliography{8Reference}

\clearpage

\end{document}